\definecolor{webred}{rgb}{0.5,0,0}
\definecolor{webblue}{rgb}{0,0,0.8}
\newcommand{\R}{\mathbb{R}}
\newcommand{\Rn}{\mathbb{R}^n}
\newcommand{\coeffA}{a_1}
\newcommand{\coeffB}{a_2}
\newcommand{\coeffC}{a_3}
\newcommand{\PZ}{\mathcal{PZ}}
\newcommand{\layers}{\kappa}
\newcommand{\actFun}{\mu}
\newcommand{\fitFun}{g}
\newcommand{\precision}{\delta}
\newcommand{\neurons}{v}
\newcommand{\dist}{r}
\newcommand{\dims}{o}
\newcommand{\subsecminus}{-0.25cm}
\newcommand{\secminus}{-0.25cm}
\xpatchcmd{\algorithmic}{\itemsep\z@}{\itemsep=1ex plus1pt}{}{}
\newcommand{\myparagraph}[1]{\vspace{-0.2cm}\paragraph{#1}\mbox{}\\ \vspace{-0.3cm}}
\begin{document}
\title{Open- and Closed-Loop Neural Network Verification using Polynomial Zonotopes}

\titlerunning{Open- and Closed-Loop Neural Network Verification using Poly. Zonotopes}

\author{Niklas Kochdumper\inst{1}, Christian Schilling\inst{2}, Matthias Althoff\inst{3}, and Stanley Bak\inst{1}}


\authorrunning{N. Kochdumper et al.}

\institute{Stony Brook University, Stony Brook, NY, USA, \\
\email{\{niklas.kochdumper,stanley.bak\}@stonybrook.edu} \and Aalborg University, Aalborg, Denmark,  \email{christianms@cs.aau.dk} \and Technichal University of Munich, Garching, Germany, \email{althoff@tum.de} \vspace{-0.5cm}}
\maketitle              
\begin{abstract}
We present a novel approach to efficiently compute tight non-convex enclosures of the image through neural networks with ReLU, sigmoid, or hyperbolic tangent activation functions. In particular, we abstract the input-output relation of each neuron by a polynomial approximation, which is evaluated in a set-based manner using polynomial zonotopes. While our approach can also can be beneficial for open-loop neural network verification, our main application is reachability analysis of neural network controlled systems, where polynomial zonotopes are able to capture the non-convexity caused by the neural network as well as the system dynamics. This results in a superior performance compared to other methods, as we demonstrate on various benchmarks.


\keywords{Neural network verification \and neural network controlled systems \and reachability analysis \and polynomial zonotopes \and formal verification.}
\end{abstract}
\vspace{-0.7cm}
\section{Introduction}
\vspace{-0.1cm}

\addtocounter{footnote}{-3}

While previously artificial intelligence was mainly used for soft applications such as movie recommendations \cite{Christakou2007}, facial recognition \cite{Khan2019}, or chess computers \cite{David2016}, it is now also increasingly applied in safety-critical applications, such as autonomous driving \cite{Riedmiller2007}, human-robot collaboration \cite{Mukherjee2022}, or power system control \cite{Beaufays1994}. In contrast to soft applications, where failures usually only have minor consequences, failures in safety-critical applications in the worst case result in loss of human lives. Consequently, in order to prevent those failures, there is an urgent need for efficient methods that can verify that the neural networks used for artificial intelligence function correctly. Verification problems involving neural networks can be grouped into two main categories:
\begin{itemize}
 \setlength\itemsep{1em}
 \item \textbf{Open-loop verification:} Here the task is to check if the output of the neural network for a given input set satisfies certain properties. With this setup one can for example prove that a neural network used for image classification is robust against a certain amount of noise on the image. 
 \item \textbf{Closed-loop verification:} In this case the neural network is used as a controller for a dynamical system, e.g., to steer the system to a given goal set while avoiding unsafe regions. The safety of the controlled system can be verified using reachability analysis.  
\end{itemize} 
For both of the above verification problems, the most challenging step is to compute a tight enclosure of the image through the neural network for a given input set. Due to the high expressiveness of neural networks, their images usually have complex shapes, so that convex enclosures are often too conservative for verification. In this work, we show how to overcome this limitation with our novel approach for computing tight non-convex enclosures of images through neural networks using polynomial zonotopes. 

\vspace{\subsecminus}
\subsection{State of the Art}
\vspace{-0.1cm}

We first summarize the state of the art for open-loop neural network verification followed by reachability analysis for neural network controlled systems. Many different set representations have been proposed for computing enclosures of the image through a neural network, including intervals \cite{Wang2018}, polytopes \cite{Tran2019}, zonotopes \cite{Singh2018}, star sets \cite{Tran2019c}, and Taylor models \cite{Ivanov2021}. For neural networks with ReLU activation functions, it is possible to compute the exact image. This can be either achieved by recursively partitioning the input set into piecewise affine regions \cite{Vincent2021}, or by propagating the initial set through the network using polytopes \cite{Tran2019,Yang2021} or star sets \cite{Tran2019c}, where the set is split at all neurons that are both active or inactive. In either case the exact image is in the worst case given as a union of $2^\neurons$ convex sets, with $\neurons$ being the number of neurons in the network. To avoid this high computational complexity for exact image computation, most approaches compute a tight enclosure instead using an abstraction of the neural network. For ReLU activation functions one commonly used abstraction is the triangle relaxation \cite{Ehlers2017} (see Fig.~\ref{fig:abstractionReLU}), which can be conveniently integrated into set propagation using star sets \cite{Tran2019c}. Another possibility is to abstract the input-output relation by a zonotope (see Fig.~\ref{fig:abstractionReLU}), which is possible for ReLU, sigmoid, and hyperbolic tangent activation functions \cite{Singh2018}. One can also apply Taylor model arithmetic \cite{Makino2003} to compute the image through networks with sigmoid and hyperbolic tangent activation \cite{Ivanov2021}, which corresponds to an abstraction of the input-output relation by a Taylor series expansion. In order to better capture dependencies between different neurons, some approaches also abstract the input-output relation of multiple neurons at once \cite{Singh2019b,Mueller2021}. 

\begin{figure}[!tb] 
	\centering
	\setlength{\belowcaptionskip}{-5pt}
	\includegraphics[width = 0.99\textwidth]{./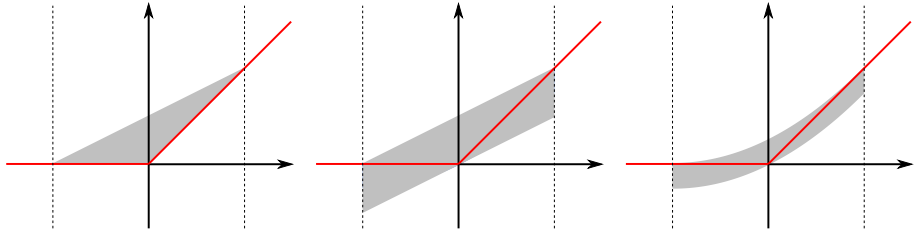}
	\vspace{-10pt}
	\caption{Triangle relaxation (left), zonotope abstraction (middle), and polynomial zonotope abstraction (right) of the ReLU activation function.}
	\label{fig:abstractionReLU}
\end{figure}

While computation of the exact image is infeasible for large networks, the enclosures obtained by abstractions are often too conservative for verification. To obtain complete verifiers, many approaches therefore use branch and bound strategies \cite{Bunel2020} that split the input set and/or single neurons until the specification can either be proven or a counterexample is found. For computational reasons branch and bound strategies are usually combined with approaches that are able to compute rough interval bounds for the neural network output very fast. Those bounds can for example be obtained using symbolic intervals \cite{Wang2018} that store linear constraints on the variables in addition to the interval bounds to preserve dependencies. The DeepPoly approach \cite{Singh2019a} uses a similar concept, but applies a back-substitution scheme to obtain tighter bounds. With the FastLin method \cite{Weng2018} linear bounds for the overall network can be computed from linear bounds for the single neurons. The CROWN approach \cite{Zhang2018b} extends this concept to linear bounds with different slopes as well as quadratic bounds. Several additional improvements for the CROWN approach have been proposed, including slope optimization using gradient descent \cite{Xu2021} and efficient ReLU splitting \cite{Wang2021}. Instead of explicitly computing the image, many approaches also aim to verify the specification directly using SMT solvers \cite{Pulina2012,Katz2017}, mixed-integer linear programming \cite{Cheng2017,Tjeng2019}, semidefinite programming \cite{Raghunathan2018}, and convex optimization \cite{Khedr2021}. 

For reachability analysis of neural network controlled systems one has to compute the set of control inputs in each control cycle, which is the image of the current reachable set through the neural network controller. Early approaches compute the image for ReLU networks exactly using polytopes \cite{Xiang2018b} or star sets \cite{Tran2019b}. Since in this case the number of coexisting sets grows rapidly over time, these approaches have to unite sets using convex hulls \cite{Xiang2018b} or interval enclosures \cite{Tran2019b}, which often results in large over-approximations. If template polyhedra are used as a set representation, reachability analysis for neural network controlled systems with discrete-time plants reduces to the task of computing the maximum output along the template directions \cite{Dutta2018b}, which can be done efficiently. Neural network controllers with sigmoid and hyperbolic tangent activation functions can be converted to an equivalent hybrid automaton \cite{Ivanov2019}, which can be combined with the dynamics of the plant using the automaton product. However, since each neuron is represented by an additional state, the resulting hybrid automaton is very high-dimensional, which makes reachability analysis challenging. Some approaches approximate the overall network with a polynomial function \cite{Dutta2019b,Huang2019} using polynomial regression based on samples \cite{Dutta2019b} and Bernstein polynomials \cite{Huang2019}. Yet another class of methods \cite{Claviere2021,Ivanov2021,Schilling2022,Tran2020b} employs abstractions of the input-output relation for the neurons to compute the set of control inputs using intervals \cite{Claviere2021}, star sets \cite{Tran2020b}, Taylor models\cite{Ivanov2021}, and a combination of zonotopes and Taylor models\cite{Schilling2022}. Common tools for reachability analysis of neural network controlled systems are JuliaReach \cite{Bogomolov2019a}, NNV \cite{Tran2020b}, POLAR \cite{Huang2022}, ReachNN* \cite{Fan2020}, RINO \cite{Goubault2022}, Sherlock \cite{Dutta2019}, Verisig \cite{Ivanov2019}, and Verisig 2.0 \cite{Ivanov2021}, where JuliaReach uses zonotopes for neural network abstraction \cite{Schilling2022}, NVV supports multiple set representations, ReachNN* applies the Bernstein polynomial method \cite{Huang2019}, POLAR approximates single neurons by Bernstein polynomials \cite{Huang2022}, RINO computes interval inner- and outer-approximations \cite{Goubault2022}, Sherlock uses the polynomial regression approach \cite{Dutta2019b}, Verisig performs the conversion to a hybrid automaton \cite{Ivanov2019}, and Verisig 2.0 uses the Taylor model based neural network abstraction method \cite{Ivanov2021}.

\vspace{\subsecminus}
\subsection{Overview}
\vspace{-3pt}

In this work, we present a novel approach for computing tight non-convex enclosures of images through neural networks with ReLU, sigmoid, or hyperbolic tangent activation functions. The high-level idea is to approximate the input-output relation of each neuron by a polynomial function, which results in the abstraction visualized in Fig.~\ref{fig:abstractionReLU}. Since polynomial zonotopes are closed under polynomial maps, the image through this function can be computed exactly, yielding a tight enclosure of the image through the overall neural network. The remainder of this paper is structured as follows: After introducing some preliminaries in Sec.~\ref{sec:preliminaries}, we present our approach for computing tight enclosures of images through neural networks in Sec.~\ref{sec:image}. Next, we show how to utilize this result for reachability analysis of neural network controlled systems in Sec.~\ref{sec:reach}. Afterwards, in Sec.~\ref{sec:operations}, we introduce some special operations on polynomial zonotopes that we require for image and reachable set computation, before we finally demonstrate the performance of our approach on numerical examples in Sec.~\ref{sec:numEx}.

\vspace{-2pt}
\vspace{\subsecminus}
\subsection{Notation}
\vspace{-3pt}

Sets are denoted by calligraphic letters, matrices by uppercase letters, and vectors by lowercase letters. Given a vector $b \in \mathbb{R}^n$, $b_{(i)}$ refers to the $i$-th entry. Given a matrix $A \in \mathbb{R}^{\dims \times n}$, $A_{(i,\cdot)}$ represents the $i$-th matrix row, $A_{(\cdot,j)}$ the $j$-th column, and $A_{(i,j)}$ the $j$-th entry of matrix row $i$. The concatenation of two matrices $C$ and $D$ is denoted by $[C~D]$, and $I_n \in \R^{n \times n}$ is the identity matrix. The symbols $\mathbf{0}$ and $\mathbf{1}$ represent matrices of zeros and ones of proper dimension, the empty matrix is denoted by $[~]$, and $\mathrm{diag}(a)$ returns a diagonal matrix with $a \in \R^n$ on the diagonal. Given a function $f(x)$ defined as $f:~\R \to \R$, $f'(x)$ and $f''(x)$ denote the first and second derivative with respect to $x$. The left multiplication of a matrix $A \in \mathbb{R}^{\dims \times n}$ with a set $\mathcal{S} \subset \mathbb{R}^n$ is defined as $A \, \mathcal{S} := \{ A\, s ~ | ~ s \in \mathcal{S} \}$, the Minkowski addition of two sets $\mathcal{S}_1 \subset \mathbb{R}^n$ and $\mathcal{S}_2 \subset \mathbb{R}^n$ is defined as $\mathcal{S}_1 \oplus \mathcal{S}_2 := \{ s_1 + s_2 ~|~ s_1 \in \mathcal{S}_1, s_2 \in \mathcal{S}_2 \}$, and the Cartesian product of two sets $\mathcal{S}_1 \subset \mathbb{R}^n$ and $\mathcal{S}_2 \subset \mathbb{R}^\dims$ is defined as $\mathcal{S}_1 \times \mathcal{S}_2 := \big\{ [s_1^T~s_2^T]^T ~|~ s_1 \in \mathcal{S}_1, s_2 \in \mathcal{S}_2 \big\}$. We further introduce an $n$-dimensional interval as $\mathcal{I} := [l,u],~ \forall i ~ l_{(i)} \leq u_{(i)},~ l,u \in \mathbb{R}^n$.


\vspace{\secminus}
\section{Preliminaries} \label{sec:preliminaries}
\vspace{-3pt}

Let us first introduce some preliminaries required throughout the paper. While the concepts presented in this work can equally be applied to more complex network architectures, we focus on feed-forward neural networks for simplicity:

\begin{definition}
(Feed-forward neural network) A feed-forward neural network with $\layers$ hidden layers consists of weight matrices $W_i \in \R^{\neurons_i \times \neurons_{i-1}}$ and bias vectors $b_i \in \R^{\neurons_i}$ with $i \in \{1,\dots,\layers+1\}$ and $\neurons_i$ denoting the number of neurons in layer $i$. The output $y \in \R^{\neurons_{\layers+1}}$ of the neural network for the input $x \in \R^{\neurons_0}$ is
\begin{equation*}
	y := y_{\layers+1} ~~ \text{with} ~~ y_0 = x,~~ y_{i(j)} = \actFun \bigg (\sum_{k=1}^{\neurons_{i-1}} W_{i(j,k)} \, y_{i-1(k)} + b_{i(j)} \bigg), ~ i = 1,\dots,\layers+1,
\end{equation*}
where $\actFun:~\R \to \R$ is the activation function.
\label{def:neuralNetwork}
\end{definition}
In this paper we consider ReLU activations $\actFun(x) = \max(0,x)$, sigmoid activations $\actFun(x) = \sigma(x) = 1/(1+e^{-x})$, and hyperbolic tangent activations $\actFun(x) = \tanh(x) = (e^x - e^{-x})/(e^x + e^{-x})$. Moreover, neural networks often do not apply activation functions on the output neurons, which corresponds to using the identity map $\actFun(x) = x$ for the last layer. The image $\mathcal{Y}$ through a neural network is defined as the set of outputs for a given set of inputs $\mathcal{X}_0$, which is according to Def.~\ref{def:neuralNetwork} given as 
\begin{equation*}
	\mathcal{Y} = \bigg\{ y_{\layers+1} ~\bigg|~ y_0 \in \mathcal{X}_0,~\forall i \in \{1,\dots,\layers+1\}:~ y_{i(j)} = \actFun \bigg (\sum_{k=1}^{\neurons_{i-1}} W_{i(j,k)} \, y_{i-1(k)} + b_{i(j)} \bigg)\bigg\}.
\end{equation*}
We present a novel approach for tightly enclosing the image through a neural network by a polynomial zonotope \cite{Althoff2013a}, where we use the sparse representation of polynomial zonotopes \cite{Kochdumper2019}\footnote{In contrast to \cite[Def.~1]{Kochdumper2019}, we explicitly do not integrate the constant offset $c$ in $G$. Moreover, we omit the identifier vector used in \cite{Kochdumper2019} for simplicity}:

\begin{figure}[!tb] \label{fig:examplePolyZono}
	\centering
	\includegraphics[width = \textwidth]{./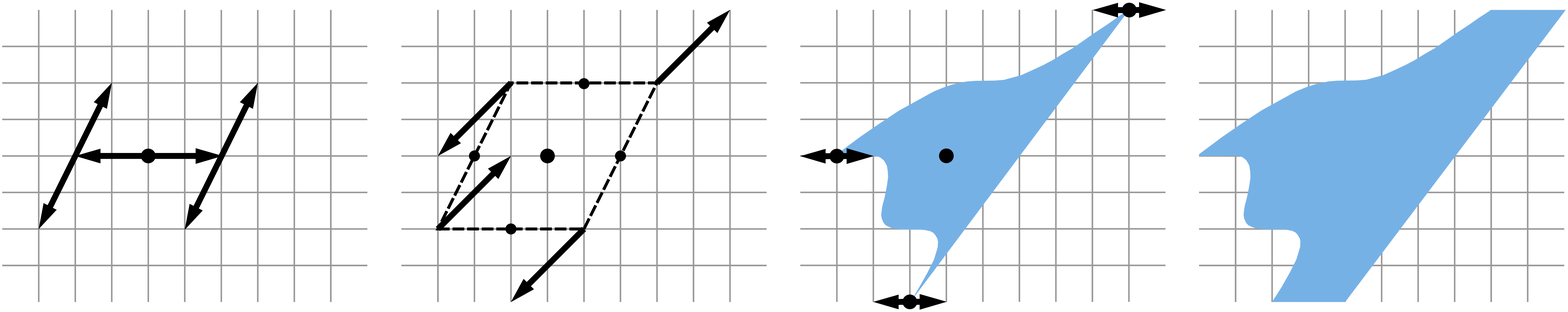}
	\vspace{-20pt}
	\caption{Step-by-step construction of the polynomial zonotope from Example~\ref{ex:PolyZonotope}.}
\end{figure}

\begin{definition}
  (Polynomial zonotope) Given a constant offset $c \in \mathbb{R}^n$, a generator matrix of dependent generators $G \in \mathbb{R}^{n \times h}$, a generator matrix of independent generators $G_I \in \mathbb{R}^{n \times q}$, and an exponent matrix $E \in \mathbb{N}_{0}^{p \times h}$, a polynomial zonotope $\mathcal{PZ} \subset \Rn$ is defined as  
  \begin{equation*}
    \mathcal{PZ} := \bigg\{ c + \sum _{i=1}^h \bigg( \prod _{k=1}^p \alpha _k ^{E_{(k,i)}} \bigg) G_{(\cdot,i)} + \sum _{j=1}^{q} \beta _j G_{I(\cdot,j)} ~ \bigg| ~ \alpha_k, \beta_j \in [-1,1] \bigg\}.
  \end{equation*}
  The scalars $\alpha_k$ are called \textit{ dependent factors} since a change in their value affects multiplication with multiple generators. Analogously, the scalars $\beta_j$ are called \textit{independent factors} because they only affect the multiplication with one generator.
For a concise notation we use the shorthand $\mathcal{PZ} = \langle c,G, G_I, E \rangle_{PZ}$.
  \label{def:polyZono}
\end{definition}  
Let us demonstrate polynomial zonotopes by an example:

\begin{example} 
	The polynomial zonotope
	\begin{equation*}
		\mathcal{PZ} = \left\langle \begin{bmatrix} 4 \\ 4 \end{bmatrix}, \begin{bmatrix} 2 & ~1~ & 2 \\ 0 & ~2~ & 2 \end{bmatrix}, \begin{bmatrix} 1 \\ 0 \end{bmatrix}, \begin{bmatrix} 1 & ~0~ & 3 \\ 0 & 1 & 1 \end{bmatrix} \right\rangle_{PZ}
	\end{equation*}
	defines the set
	\begin{equation*}
  		\mathcal{PZ} = \bigg\{ \begin{bmatrix} 4 \\ 4 \end{bmatrix} + \begin{bmatrix} 2 \\ 0 	\end{bmatrix} \alpha_1 + \begin{bmatrix} 1 \\ 2 \end{bmatrix} \alpha_2 + \begin{bmatrix} 2 \\ 2 \end{bmatrix} \alpha_1^3 \alpha_2 + \begin{bmatrix} 1 \\ 0 \end{bmatrix} \beta_1 ~ \bigg| ~ \alpha_1, \alpha_2, \beta_1 \in [-1,1] \bigg\}.
	\end{equation*}
The construction of this polynomial zonotope is visualized in Fig.~\ref{fig:examplePolyZono}. 
	\label{ex:PolyZonotope}
\end{example}


\section{Image Enclosure} \label{sec:image}

We now present our novel approach for computing tight non-convex enclosures of images through neural networks. The general concept is to approximate the input-output relation of each neuron by a polynomial function, the image through which can be computed exactly since polynomial zonotopes are closed under polynomial maps. For simplicity, we focus on quadratic approximations here, but the extension to polynomials of higher degree is straightforward. 

The overall procedure for computing the image is summarized in Alg.~\ref{alg:image}, where the computation proceeds layer by layer. For each neuron in the current layer $i$ we first calculate the corresponding input set in Line~\ref{line:affineMap}. Next, in Line~\ref{line:bounds2}, we compute a lower and an upper bound for the input to the neuron. Using these bounds we then calculate a quadratic approximation for the neuron's input-output relation in Line~\ref{line:approx}. This approximation is evaluated in a set-based manner in Line~\ref{line:quadMap}. The resulting polynomial zonotope $\langle c_q,G_q,G_{I,q},E_q\rangle_{PZ}$ forms the $j$-th dimension of the set $\PZ$ representing the output of the whole layer (see Line~\ref{line:stack} and Line \ref{line:minkSum}). To obtain a formally correct enclosure, we have to account for the error made by the approximation. We therefore compute the difference between the activation function and the quadratic approximation in Line~\ref{line:diff} and add the result to the output set in Line \ref{line:minkSum}. By repeating this procedure for all layers, we finally obtain a tight enclosure of the image through the neural network. A demonstrating example for Alg.~\ref{alg:image} is shown in Fig.~\ref{fig:neuralNetwork}.

For ReLU activations the quadratic approximation only needs to be calculated if $l < 0 \wedge u > 0$ since we can use the exact input-output relations $\fitFun(x) = x$ and $\fitFun(x) = 0$ if $l \geq 0$ or $u \leq 0$ holds. Due to the evaluation of the quadratic map defined by $\fitFun(x)$, the representation size of the polynomial zonotope $\PZ$ increases in each layer. For deep neural networks it is therefore advisable to repeatedly reduce the representation size after each layer using order reduction \cite[Prop.~16]{Kochdumper2019}. Moreover, one can also apply the $\texttt{compact}$ operation described in \cite[Prop.~2]{Kochdumper2019} after each layer to remove potential redundancies from $\PZ$. Next, we explain the approximation of the input-output relation as well as the computation of the approximation error in detail. 

\begin{algorithm}[!tb]
	\caption{Enclosure of the image through a neural network} \label{alg:image}
	{\raggedright \textbf{Require:} Neural network with weight matrices $W_i$ and bias vectors $b_i$, initial set $\mathcal{X}_0$.
	
	\textbf{Ensure:} Tight enclosure $\mathcal{PZ} \supseteq \mathcal{Y}$ of the image $\mathcal{Y}$.
	
	}
	\begin{algorithmic}[1]
		\State $\PZ \gets \mathcal{X}_0$	
		\For{$i \gets 1$ to $\layers +1$} \hfill (loop over all layers)
			\State $c \gets \mathbf{0},~G \gets \mathbf{0}, ~G_I \gets \mathbf{0},~\underline{d} \gets \mathbf{0},~\overline{d} \gets \mathbf{0}$ 
			\For{$j \gets 1$ to $\neurons_i$} \hfill (loop over all neurons in the layer)
				\State $\PZ_j \gets W_{i(j,\cdot)} \PZ \oplus b_{i(j)}$ \label{line:affineMap} \hfill (map with weight matrix and bias using \eqref{eq:affineMap})
				\State $l,u \gets$ lower and upper bound for $\mathcal{PZ}_j$ according to Prop.~\ref{prop:intervalEnclosure} \label{line:bounds2}
				\State $\fitFun(x) = \coeffA \,x^2 + \coeffB \,x + \coeffC \gets $ quad. approx. on $[l,u]$ according to Sec.~\ref{sec:actFunApprox} \label{line:approx}
				\State $\langle c_q,G_q,G_{I,q},E_q\rangle_{PZ} \gets$ image of $\mathcal{PZ}_j$ through $\fitFun(x)$ according to Prop.~\ref{prop:quadMap} \label{line:quadMap}
				\State $c_{(j)} \gets c_q, ~ G_{(j,\cdot)} \gets G_q, ~ G_{I(j,\cdot)} \gets G_{I,q}, ~ E \gets E_q$ \label{line:stack} \hfill (add to output set)
				\State $\underline{d}_{(j)},\overline{d}_{(j)} \gets$ difference between $\fitFun(x)$ and activation function acc. to Sec.~\ref{sec:bounds} \label{line:diff}
			\EndFor
			\State $\PZ \gets \langle c,G,G_I,E \rangle_{PZ} \oplus [\underline{d},\overline{d}]$ \label{line:minkSum} \hfill (add approximation error using \eqref{eq:minkSumInt})
		\EndFor
	\end{algorithmic}
\end{algorithm}

\vspace{\subsecminus}
\subsection{Activation Function Approximation}
\label{sec:actFunApprox}

The centerpiece of our algorithm for computing the image of a neural network is the approximation of the input-output relation defined by the activation function $\actFun(x)$ with a quadratic expression $\fitFun(x)= \coeffA \,x^2 + \coeffB \,x + \coeffC$ (see Line~\ref{line:approx} of Alg.~\ref{alg:image}). In this section we present multiple possibilities to obtain good approximations. 

\myparagraph{Polynomial Regression}

\noindent For polynomial regression we uniformly select $N$ samples $x_i$ from the interval $[l,u]$ and then determine the polynomial coefficients $\coeffA,\coeffB,\coeffC$ by minimizing the average squared distance between the activation function and the quadratic approximation:
\begin{equation}
	\min_{\coeffA,\coeffB,\coeffC} \frac{1}{N} \sum_{i=1}^N \big(\actFun(x_i) - \coeffA \,x_i^2 - \coeffB \,x_i - \coeffC\big)^2.
	\label{eq:polyReg}
\end{equation}
It is well known that the optimal solution to \eqref{eq:polyReg} is  
\begin{equation*}
	\begin{bmatrix} \coeffA \\ \coeffB \\ \coeffC \end{bmatrix} = A^\dagger b ~~ \text{with} ~~ A = \begin{bmatrix} x_1^2 & x_1 & 1 \\ \vdots & \vdots & \vdots \\ x_N^2 & x_N & 1 \end{bmatrix}, ~~ b = \begin{bmatrix} \actFun(x_1) \\ \vdots \\ \actFun(x_N) \end{bmatrix},
\end{equation*}
where $A^\dagger = (A^T A)^{-1} A^T$ is the Moore-Penrose inverse of matrix A. For the numerical experiments in this paper we use $N=10$ samples.

\myparagraph{Closed-Form Expression}

\noindent For ReLU activations a closed-form expression for a quadratic approximation can be obtained by enforcing the conditions $\fitFun(l) = 0$, $\fitFun'(l) = 0$, and $\fitFun(u) = u$. The solution to the corresponding equation system $\coeffA \,l^2 + \coeffB \,l + \coeffC = 0$, $2 \coeffA l + \coeffB = 0$, $\coeffA \,u^2 + \coeffB \,u + \coeffC = u$ is
\begin{equation*}
	\coeffA = \frac{u}{(u-l)^2}, ~~ \coeffB = \frac{-2l u}{(u-l)^2}, ~~ \coeffC = \frac{u^2(2l-u)}{(u-l)^2} + u,
\end{equation*}
which results in the enclosure visualized in Fig.~\ref{fig:abstractionReLU}. This closed-form expression is very precise if the interval $[l,u]$ is close to being symmetric with respect to the origin ($|l| \approx |u|$), but becomes less accurate if one bound is significantly larger than the other ($|u| \gg |l|$ or $|l| \gg |u|$).

\myparagraph{Taylor Series Expansion}

\noindent For sigmoid and hyperbolic tangent activation functions a quadratic fit can be obtained using a second-order Taylor series expansion of the activation function $\actFun(x)$ at the expansion point $x^* = 0.5(l+u)$: 
\begin{equation*}
\begin{split}
	& \actFun(x) \approx \actFun(x^*) + \actFun'(x^*)(x - x^*) + 0.5 \, \actFun''(x^*)(x-x^*)^2 = \\
	& ~\underbrace{0.5 \, \actFun''(x^*)}_{\coeffA} x^2 + \big( \underbrace{\actFun'(x^*) - \actFun''(x^*)\, x^*}_{\coeffB}\big) x + \underbrace{\actFun(x^*) - \actFun'(x^*)x^* + 0.5 \, \actFun''(x^*) \, {x^*}^2}_{ \coeffC}, 
\end{split}
\end{equation*}
where the derivatives for sigmoid activations are $\actFun'(x) = \sigma(x)(1-\sigma(x))$ and $\actFun''(x) = \sigma(x)(1-\sigma(x))(1-2\sigma(x))$, and the derivatives for hyperbolic tangent activations are $\actFun'(x) = 1-\tanh(x)^2$ and $\actFun''(x) = -2 \tanh(x)(1-\tanh(x)^2)$. The Taylor series expansion method is identical to the concept used in \cite{Ivanov2021}.

\myparagraph{Linear Approximation}

\noindent Since a linear function represents a special case of a quadratic function, Alg.~\ref{alg:image} can also be used in combination with linear approximations. Such approximations are provided by the zonotope abstraction in \cite{Singh2018}. Since closed-form expressions for the bounds $\underline{d}$ and $\overline{d}$ of the approximation error are already specified in \cite{Singh2018}, we can omit the error bound computation described in Sec.~\ref{sec:bounds} in this case. For ReLU activations we obtain according to \cite[Theorem~3.1]{Singh2018}
\begin{equation*}
	\coeffA = 0, ~~ \coeffB = \frac{u}{u-l}, ~~ \coeffC = \frac{-u \, l}{2(u-l)}, ~~ \underline{d} = \frac{-u \, l}{2(u-l)}, ~~ \overline{d} = \frac{u \, l}{2(u-l)},
\end{equation*}
which results in the zonotope enclosure visualized in Fig.~\ref{fig:abstractionReLU}. For sigmoid and hyperbolic tangent activations we obtain according to \cite[Theorem~3.2]{Singh2018}
\begin{equation*}
\begin{split}
	& \coeffA = 0,~~\coeffB = \min(\actFun'(l),\actFun'(u)),~~\coeffC = 0.5 (\actFun(u) + \actFun(l) - \coeffB(u+l)), \\
	& \underline{d} = 0.5 (\actFun(u) - \actFun(l) - \coeffB(u-l)),~~ \overline{d} = -0.5( \actFun(u) - \actFun(l) - \coeffB(u-l)),
\end{split}
\end{equation*}
where the derivatives of the sigmoid function and the hyperbolic tangent are specified in the paragraph above.

\begin{figure}[!tb] 
	\centering
	\includegraphics[width = 0.9\textwidth]{./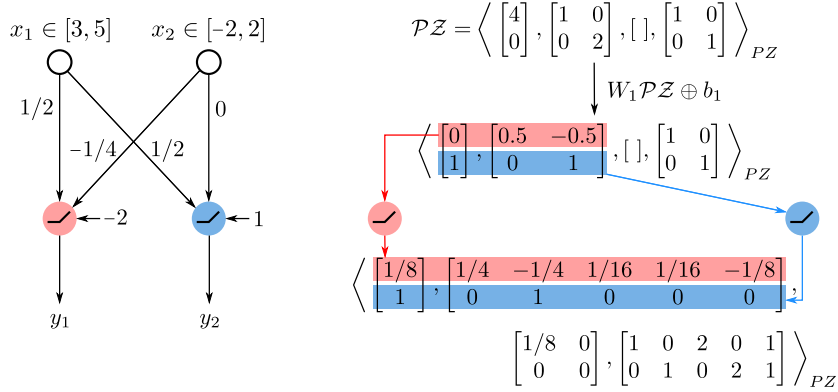}
	\caption{Exemplary neural network with ReLU activations (left) and the corresponding image enclosure computed with polynomial zonotopes (right), where we use the approximation $\fitFun(x) = 0.25\,x^2 + 0.5\,x + 0.25$ for the red neuron and the approximation $\fitFun(x) = x$ for the blue neuron.}
	\label{fig:neuralNetwork}
\end{figure}

We observed from experiments that for ReLU activations the closed-form expression usually results in a tighter enclosure of the image than polynomial regression. For sigmoid and hyperbolic tangent activations, on the other hand, polynomial regression usually performs better than the Taylor series expansion. It is also possible to combine multiple of the methods described above by executing them in parallel and selecting the one that results in the smallest approximation error $[\underline{d},\overline{d}]$. 
Since the linear approximation does not increase the number of generators, it represents an alternative to order reduction when dealing with deep neural networks. Here, the development of a method to decide automatically for which layers to use a linear and for which a quadratic approximation is a promising direction for future research.

\vspace{-2pt}
\vspace{\subsecminus}
\subsection{Bounding the Approximation Error}
\label{sec:bounds}
\vspace{-3pt}

To obtain a sound enclosure we need to compute the difference between the activation function $\actFun(x)$ and the quadratic approximation $\fitFun(x) = \coeffA \,x^2 + \coeffB \,x + \coeffC$ from Sec.~\ref{sec:actFunApprox} on the interval $[l,u]$. In particular, this corresponds to determining
\begin{equation*}
	\underline{d} = \min_{x \in [l,u]} \underbrace{\actFun(x) - \coeffA \,x^2 - \coeffB \,x - \coeffC}_{d(x)} ~~\text{and}~~ \overline{d} = \max_{x \in [l,u]} \underbrace{\actFun(x) - \coeffA \,x^2 - \coeffB \,x - \coeffC}_{d(x)}.
\end{equation*}
Depending on the type of activation function, we use different methods for this.  

\myparagraph{Rectified Linear Unit (ReLU)}

\noindent For ReLU activation functions we split the interval $[l,u]$ into the two intervals $[l,0]$ and $[0,u]$ on which the activation function is constant and linear, respectively. On the interval $[l,0]$ we have $d(x) = - \coeffA \,x^2 - \coeffB \,x - \coeffC$, and on the interval $[0,u]$ we have $d(x) = - \coeffA \,x^2 + (1-\coeffB) \,x - \coeffC$. In both cases $d(x)$ is a quadratic function whose maximum and minimum values are either located on the interval boundary or at the point $x^*$ where the derivative of $d(x)$ is equal to zero. The lower bound on $[l,0]$ is therefore given as $\underline{d} = \min(d(l),d(x^*),d(0))$ if $x^* \in [l,0]$ and $\underline{d} = \min(d(l),d(0))$ if $x^* \not\in [l,0]$, where $x^* = -0.5\, \coeffB/\coeffA$. The upper bound as well as the bounds for $[0,u]$ are computed in a similar manner. Finally, the overall bounds are obtained by taking the minimum and maximum of the bounds for the intervals $[l,0]$ and $[0,u]$. 

\myparagraph{Sigmoid and Hyperbolic Tangent}

\noindent Here our high-level idea is to sample the function $d(x)$ at points $x_i$ with distance $\Delta x$ distributed uniformly over the interval $[l,u]$. From rough bounds for the derivative $d'(x)$ we can then deduce how much the function value between two sample points changes at most, which yields tight bounds $\overline{d}_b \geq \overline{d}$ and $\underline{d}_b \leq \underline{d}$. In particular, we want to choose the sampling rate $\Delta x$ such that the bounds $\overline{d}_b,\underline{d}_b$ comply to a user-defined precision $\precision > 0$:
\begin{equation}
	\overline{d} + \precision \geq \overline{d}_b \geq \overline{d} ~~ \text{and} ~~ \underline{d} -\precision \leq \underline{d}_b \leq \underline{d}.
	\label{eq:condSampling}
\end{equation}
We observe that for both, sigmoid and hyperbolic tangent, the derivative is globally bounded by $\actFun'(x) \in [0,\overline{\actFun}]$, where $\overline{\actFun} = 0.25$ for the sigmoid and $\overline{\actFun} = 1$ for the hyperbolic tangent. In addition, it holds that the derivative of the quadratic approximation $\fitFun(x) = \coeffA \,x^2 + \coeffB \,x + \coeffC$ is bounded by $\fitFun'(x) \in [\underline{\fitFun},\overline{\fitFun}]$ on the interval $[l,u]$, where $\underline{\fitFun} = \min(2\coeffA l + \coeffB,2\coeffA u + \coeffB)$ and $\overline{\fitFun} = \max(2\coeffA l + \coeffB,2\coeffA u + \coeffB)$. As a consequence, the derivative of the difference $d(x) = \actFun(x) - \fitFun(x)$ is bounded by $d'(x) \in [-\overline{\fitFun},\overline{\actFun} - \underline{\fitFun}]$. The value of $d(x)$ can therefore at most change by $\pm \Delta d$ between two samples $x_i$ and $x_{i+1}$, where $\Delta d = \Delta x \max(|-\overline{\fitFun}|,|\overline{\actFun} -\overline{\fitFun}|)$. To satisfy \eqref{eq:condSampling} we require $\Delta d \leq \precision$, so that we have to choose the sampling rate as $\Delta x \leq \precision / \max(|-\overline{\fitFun}|,|\overline{\actFun} -\overline{\fitFun}|)$. Finally, the bounds are computed by taking the maximum and minimum of all samples: $\overline{d}_b = \max_{i} d(x_i) + \precision$ and $\underline{d}_b = \min_{i} d(x_i) - \precision$. For our experiments we use a precision of $\precision = 0.001$.


\vspace{\secminus}
\section{Neural Network Controlled Systems} \label{sec:reach}

Reachable sets for neural network controlled systems can be computed efficiently by combining our novel image enclosure approach for neural networks with a reachability algorithm for nonlinear systems. We consider general nonlinear systems
\begin{equation}
	\dot x (t) = f\big(x(t),u_c(x(t),t),w(t)\big),
	\label{eq:nonlinSys}
\end{equation}
where $x \in \Rn$ is the system state, $u_c:~\Rn \times \R \to \R^m$ is a control law, $w(t) \in \mathcal{W} \subset \R^\dist$ is a vector of uncertain disturbances, and $f:~\mathbb{R}^n \times \mathbb{R}^m \times \R^\dist \to \mathbb{R}^n$ is a Lipschitz continuous function. For neural network controlled systems the control law $u_c(x(t),t)$ is given by a neural network. Since neural network controllers are usually realized as digital controllers, we consider the sampled-data case where the control input is only updated at discrete times $t_0,t_0+\Delta t,t_0 + 2\Delta t, \dots, t_F$ and kept constant in between. Here, $t_0$ is the initial time, $t_F$ is the final time, and $\Delta t$ is the sampling rate. Without loss of generality, we assume from now on that $t_0 = 0$ and $t_F$ is a multiple of $\Delta t$. The reachable set is defined as follows:
\begin{definition}
	(Reachable set) Let $\xi(t,x_0,u_c(\cdot),w(\cdot))$ denote the solution to \eqref{eq:nonlinSys} for initial state $x_0 = x(0)$, control law $u_c(\cdot)$, and the disturbance trajectory $w(\cdot)$. The reachable set for an initial set $\mathcal{X}_0 \subset \mathbb{R}^n$ and a disturbance set $\mathcal{W} \subset \mathbb{R}^\dist$ is 
	\begin{equation*}
		\mathcal{R}(t) := \big\{ \xi(t,x_0,u_c(\cdot),w(\cdot)) ~\big |~ x_0 \in \mathcal{X}_0, \forall t^* \in [0,t]:~ w(t^*) \in \mathcal{W} \big \}.
	\end{equation*} 
\end{definition}
Since the exact reachable set cannot be computed for general nonlinear systems, we compute a tight enclosure instead. We exploit that the control input is piecewise constant, so that the reachable set for each control cycle can be computed using the extended system
\begin{equation}
	\begin{bmatrix} \dot x(t) \\ \dot u(t) \end{bmatrix} = \begin{bmatrix} f(x(t),u(t),w(t)) \\ \mathbf{0} \end{bmatrix}
	\label{eq:extSys}
\end{equation}
together with the initial set $\mathcal{X}_0 \times \mathcal{Y}$, where $\mathcal{Y}$ is the image of $\mathcal{X}_0$ through the neural network controller. The overall algorithm is specified in Alg.~\ref{alg:reach}. Its high-level concept is to loop over all control cycles, where in each cycle we first compute the image of the current reachable set through the neural network controller in Line~\ref{line:image}. Next, the image is combined with the reachable set using the Cartesian product in Line~\ref{line:cartProd}. This yields the initial set for the extended system in \eqref{eq:extSys}, for which we compute the reachable set $\widehat{\mathcal{R}}(t_{i+1})$ at time $t_{i+1}$ as well as the reachable set $\widehat{\mathcal{R}}(\tau_i)$ for the time interval $\tau_i$ in Line~\ref{line:reach}. While it is possible to use arbitrary reachability algorithms for nonlinear systems,  we apply the conservative polynomialization algorithm \cite{Althoff2013a} since it performs especially well in combination with polynomial zonotopes. Finally, in Line~\ref{line:project}, we project the reachable set back to the original system dimensions. 

\begin{algorithm}[!tb]
	\caption{Reachable set for a neural network controlled system} \label{alg:reach}
	{\raggedright \textbf{Require:} Nonlinear system $\dot x(t) = f(x(t),u_c(x(t),t),w(t))$, neural network controller $u_c(x(t),t)$, initial set $\mathcal{X}_0$, disturbance set $\mathcal{W}$, final time $t_F$, sampling rate $\Delta t$.
	
	\textbf{Ensure:} Tight enclosure $\mathcal{R} \supseteq \mathcal{R}([0,t_F])$ of the reachable set $\mathcal{R}([0,t_F])$.
	
	}
	\begin{algorithmic}[1]
		\State $t_0 \gets 0$, $\mathcal{R}(t_0) \gets \mathcal{X}_0$	
		\For{$i \gets 0$ to $t_F/\Delta t - 1$} \hfill (loop over all control cycles)
			\State $\mathcal{Y} \gets$ image of $\mathcal{R}(t_i)$ through the neural network controller using Alg.~\ref{alg:image} \label{line:image}
			\State $\widehat{\mathcal{R}}(t_i) \gets \mathcal{R}(t_i) \times \mathcal{Y}$ \label{line:cartProd} \hfill (combine reachable set and input set using \eqref{eq:cartProd}) 
			\State $t_{i+1} \gets t_i + \Delta t$, $\tau_{i} \gets [t_{i},t_{i+1}]$ \hfill (update time)
			\State $\widehat{\mathcal{R}}(t_{i+1}),\widehat{\mathcal{R}}(\tau_i) \gets$ reachable set for extended system in \eqref{eq:extSys} starting from $\widehat{\mathcal{R}}(t_i)$ \label{line:reach}
			\State $\mathcal{R}(t_{i+1}) \gets [I_n~\mathbf{0}]\, \widehat{\mathcal{R}}(t_{i+1})$, $\mathcal{R}(\tau_{i}) \gets [I_n~\mathbf{0}]\, \widehat{\mathcal{R}}(\tau_i)$ \label{line:project} \hfill (projection using \eqref{eq:affineMap}) 
		\EndFor
		\State $\mathcal{R} \gets \bigcup_{i=0}^{t_F/\Delta t-1} \mathcal{R}(\tau_i)$ \hfill (reachable set for the whole time horizon)
	\end{algorithmic}
\end{algorithm}


\vspace{\secminus}
\section{Operations on Polynomial Zonotopes} \label{sec:operations}
\vspace{-0.1cm}

Alg.~\ref{alg:image} and Alg.~\ref{alg:reach} both require some special operations on polynomial zonotopes, the implementation of which we present now. Given a polynomial zonotope $\mathcal{PZ} = \langle c,G,G_I,E \rangle_{PZ} \subset \Rn$, a matrix $A \in \R^{o \times n}$, a vector $b \in \R^{o}$, and an interval $\mathcal{I} = [l,u] \subset \R^n$, the affine map and the Minkowski sum with an interval are given as
\begin{align}
	& A \, \mathcal{PZ} \oplus b = \langle Ac + b, AG, AG_I, E\rangle_{PZ} \label{eq:affineMap} \\
	& \mathcal{PZ} \oplus \mathcal{I} = \langle c + 0.5(u+l), G, [G_I~0.5\,\text{diag}(u-l)],E\rangle_{PZ}, \label{eq:minkSumInt}
\end{align}
which follows directly from \cite[Prop.~8]{Kochdumper2019}, \cite[Prop.~9]{Kochdumper2019}, and \cite[Prop. 2.1]{Althoff2010a}. For the Cartesian product used in Line~\ref{line:cartProd} of Alg.~\ref{alg:reach} we can exploit the special structure of the sets to calculate the Cartesian product of two polynomial zonotopes  $\mathcal{PZ}_1 = \langle c_1,G_1, G_{I,1},E_1 \rangle_{PZ} \subset \Rn$ and $\mathcal{PZ}_2 = \langle c_2,[G_2~\widehat{G}_2], [G_{I,2}~\widehat{G}_{I,2}],[E_1 ~E_2] \rangle_{PZ} \subset \R^{\dims}$ as
	\begin{equation}
		\PZ_1 \times \PZ_2 = \bigg \langle \begin{bmatrix} c_1 \\ c_2 \end{bmatrix},\begin{bmatrix} G_1 & \mathbf{0} \\ G_2 & \widehat{G}_2 \end{bmatrix},\begin{bmatrix} G_{I,1} & \mathbf{0} \\ G_{I,2} & \widehat{G}_{I,2} \end{bmatrix},[E_1 ~ E_2] \bigg \rangle_{PZ}. \label{eq:cartProd}
	\end{equation}
	In contrast to \cite[Prop.~11]{Kochdumper2019}, this implementation of the Cartesian product explicitly preserves dependencies between the two sets, which is possible since both polynomial zonotopes have identical dependent factors. Computing the exact bounds of a polynomial zonotope in Line~\ref{line:bounds2} of Alg.~\ref{alg:image} would be computationally infeasible, especially since this has to be done for each neuron in the network. We therefore compute a tight enclosure of the bounds instead, which can be done very efficiently: 

\begin{proposition} \label{prop:intervalEnclosure}
	(Interval enclosure) Given a polynomial zonotope $\PZ = \langle c,G,\linebreak[3] G_I,E \rangle_{PZ} \subset \Rn$, an enclosing interval can be computed as
	\begin{equation*}	
		\mathcal{I} = [c + g_1 - g_2 - g_3 - g_4, c + g_1 + g_2 + g_3 + g_4] \supseteq \PZ
	\end{equation*}
	with
	\begin{equation*}
	\begin{split}
		& g_1 = 0.5 \sum_{i \in \mathcal{H}} G_{(\cdot,i)}, ~g_2 = 0.5 \sum_{i \in \mathcal{H}} |G_{(\cdot,i)}|,~ g_3 = \sum_{i \in \mathcal{K}} |G_{(\cdot,i)}|,~ g_4 = \sum_{i=1}^q |G_{I(\cdot,i)}| \\
		& ~~~~~~~~ \mathcal{H} = \bigg \{ i~ \bigg| ~ \prod_{j=1}^p \big(1-E_{(j,i)} \, \mathrm{mod}~2)\big) = 1 \bigg \}	, ~~ \mathcal{K} = \{1,\dots,h \} \setminus \mathcal{H},
	\end{split}
	\end{equation*}
	where $x \, \mathrm{mod} \,y$, $x,y \in \mathbb{N}_0$ is the modulo operation and $\setminus$ denotes the set difference.
\end{proposition}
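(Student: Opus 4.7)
The plan is to decompose the polynomial zonotope into three independently enclosable parts and then bound each using elementary interval arithmetic. Concretely, I would split the sum defining $\PZ - c$ into (i) dependent generators indexed by $\mathcal{H}$ (where every exponent in the $i$-th column of $E$ is even), (ii) dependent generators indexed by $\mathcal{K}$ (where at least one exponent is odd), and (iii) independent generators. For each fixed choice of $\alpha_1,\dots,\alpha_p,\beta_1,\dots,\beta_q \in [-1,1]$, each of these three pieces lies in an interval that can be written in closed form, and since the Minkowski sum is monotone under inclusion, the sum of those three intervals is an enclosure of $\PZ - c$.

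For step (i), I would first verify that the index set $\mathcal{H}$ has the claimed meaning: since $1 - E_{(j,i)}\,\mathrm{mod}\,2 \in \{0,1\}$, the product $\prod_j (1 - E_{(j,i)}\,\mathrm{mod}\,2)$ equals $1$ exactly when every $E_{(j,i)}$ is even, and $0$ otherwise. For such $i$, the monomial $\prod_k \alpha_k^{E_{(k,i)}}$ is non-negative and bounded by $1$, hence ranges in $[0,1]$. Consequently $\prod_k \alpha_k^{E_{(k,i)}} G_{(\cdot,i)}$ lies componentwise in the axis-aligned box with center $0.5\, G_{(\cdot,i)}$ and radius $0.5\,|G_{(\cdot,i)}|$. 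Summing these interval bounds over $i \in \mathcal{H}$ (interval addition simply adds centers and radii) yields center $g_1$ and radius $g_2$.

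For step (ii), if $i \in \mathcal{K}$ then some exponent $E_{(k,i)}$ is odd; factoring this variable off, the monomial becomes $\alpha_k^{E_{(k,i)}}$ times a non-negative term in $[0,1]$, so it lies in $[-1,1]$. Thus $\prod_k \alpha_k^{E_{(k,i)}} G_{(\cdot,i)}$ is contained componentwise in the zero-centered interval of radius $|G_{(\cdot,i)}|$, and summing over $\mathcal{K}$ gives radius $g_3$. Step (iii) is immediate from $\beta_j \in [-1,1]$: the independent part is contained in the zero-centered box of radius $g_4$.

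Adding the three intervals (centers $g_1,0,0$ and radii $g_2,g_3,g_4$) and shifting by $c$ yields precisely $[c + g_1 - g_2 - g_3 - g_4,\; c + g_1 + g_2 + g_3 + g_4]$, matching $\mathcal{I}$. Soundness holds because each realization of $\PZ$ is a sum of terms from the three pieces, and the over-approximation only discards the dependencies induced by the shared factors $\alpha_k$. There is no serious obstacle; the one point that requires care is the equivalence between the condition defining $\mathcal{H}$ and the all-even-exponents characterization, together with handling the componentwise sign of $G_{(\cdot,i)}$ when converting range-of-products into interval centers and radii. Everything else reduces to standard interval arithmetic.
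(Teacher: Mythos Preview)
Your argument is correct. The decomposition into even-exponent generators ($\mathcal{H}$), odd-exponent generators ($\mathcal{K}$), and independent generators, followed by the elementary bounds $\prod_k \alpha_k^{E_{(k,i)}} \in [0,1]$ for $i \in \mathcal{H}$ and $\in [-1,1]$ for $i \in \mathcal{K}$, is exactly the right idea, and the conversion to center--radius form is handled properly.

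The paper's own proof proceeds slightly differently in presentation: it does not argue from scratch but instead factors the enclosure through an intermediate zonotope, citing \cite[Prop.~5]{Kochdumper2019} for the step $\mathcal{PZ} \subseteq \mathcal{Z}$ and \cite[Prop.~2.2]{Althoff2010a} for the step $\mathcal{Z} \subseteq \mathcal{I}$. Unpacking those cited results, one finds precisely your even/odd split --- the zonotope enclosure of a polynomial zonotope replaces each monomial $\prod_k \alpha_k^{E_{(k,i)}}$ by a fresh factor in $[0,1]$ or $[-1,1]$ according to the parity of the exponents, and the interval enclosure of a zonotope then sums absolute values of generators. So the mathematical content is identical; your version is simply self-contained and more elementary, trading the two citations for three short paragraphs of direct interval arithmetic. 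The paper's route has the advantage of making explicit that the interval bound factors through a tighter zonotope bound, which may be useful elsewhere; yours has the advantage of not requiring the reader to chase references.
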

\begin{proof}
	We first enclose the polynomial zonotope by a zonotope $\mathcal{Z} \supseteq \mathcal{PZ}$ according to \cite[Prop.~5]{Kochdumper2019}, and then compute an interval enclosure $\mathcal{I} \supseteq \mathcal{Z}$ of this zonotope according to \cite[Prop.~2.2]{Althoff2010a}.
\end{proof}

The core operation for Alg.~\ref{alg:image} is the computation of the image through a quadratic function. While it is possible to obtain the exact image by introducing new dependent factors, we compute a tight enclosure for computational reasons:
\begin{proposition} \label{prop:quadMap}
(Image quadratic function) Given a polynomial zonotope $\mathcal{PZ} = \langle c,G,\linebreak[3] G_I,E\rangle_{PZ} \subset \R$ and a quadratic function $\fitFun(x) = \coeffA \,x^2 + \coeffB \,x + \coeffC$ with $\coeffA,\coeffB,\coeffC,x \in \R$, the image of $\mathcal{PZ}$ through $\fitFun(x)$ can be tightly enclosed by
\begin{equation*}
	\big \{ \fitFun(x)~\big|~ x \in \PZ \big \} \subseteq \langle c_q,G_q,G_{I,q},E_q \rangle_{PZ}
\end{equation*} 
	with 
	\vspace{-0.4cm}
	\begin{equation} \label{eq:quadMap1}
	\begin{split}
		& c_q = \coeffA c^2 + \coeffB c + \coeffC + 0.5 \, \coeffA \sum_{i=1}^q G_{I(\cdot,i)}^2, ~~ G_q = \big[(2 \coeffA c + \coeffB)G ~~ \coeffA \widehat{G} \big], \\
		& E_q =  \big[E ~~ \widehat{E} \big], ~~ G_{I,q} = \big[  (2 \coeffA c + \coeffB)G_I ~~ 2\coeffA \overline{G} ~~\coeffA \widecheck{G} \big],
	\end{split}
	\end{equation}	
	where
	
	\begin{equation} \label{eq:quadMap2}
	\begin{split}
		& \widehat{G} = \big [G^2 ~~ 2\,\widehat{G}_1 ~~ \dots ~~ 2\,\widehat{G}_{h-1}\big ], ~~ \widehat{E} = \big [2 \, E ~~ \widehat{E}_1 ~~ \dots ~~ \widehat{E}_{h-1}\big ], \\
		& \widehat{G}_i = \big [G_{(i)} G_{(i+1)} ~~ \dots ~~ G_{(i)} G_{(h)}\big ], ~~ i = 1,\dots, h-1, \\
		& \widehat{E}_i = \big [E_{(\cdot,i)} + E_{(\cdot, i+1)} ~~ \dots ~~ E_{(\cdot, i)} + E_{(\cdot,h)}\big ], ~~ i = 1,\dots, h-1, \\
		& \overline{G} = \big [G_{(1)} G_I ~~ \dots ~~ G_{(h)} G_I \big], ~~ \widecheck{G} = \big[0.5 \, G_I^2 ~~ 2\,\widecheck{G}_1 ~~ \dots ~~ 2\,\widecheck{G}_{q-1}  \big], \\
		& \widecheck{G}_i = \big[ G_{I(i)} G_{I(i+1)} ~~ \dots ~~ G_{I(i)} G_{I(q)} \big], ~~ i = 1,\dots,q-1,
	\end{split}
	\end{equation}
	and the squares in $G^2$ as well as $G_I^2$ are interpreted elementwise.
\end{proposition}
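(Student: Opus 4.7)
The plan is to write an arbitrary element $x \in \PZ$ in the form $x = c + D(\alpha) + E(\beta)$, where $D(\alpha) = \sum_{i=1}^{h} m_i(\alpha)\, G_{(i)}$ with monomials $m_i(\alpha) = \prod_{k=1}^{p} \alpha_k^{E_{(k,i)}}$, and $E(\beta) = \sum_{j=1}^{q} \beta_j\, G_{I(j)}$. Substituting into $\fitFun(x)$ and expanding gives
\begin{equation*}
\fitFun(x) = \underbrace{\coeffA c^2 + \coeffB c + \coeffC}_{\text{constant}} + (2\coeffA c + \coeffB)\bigl(D + E\bigr) + \coeffA\bigl(D^2 + 2 D E + E^2\bigr),
\end{equation*}
so the task reduces to rewriting each of $D^2$, $D E$, and $E^2$ as a polynomial zonotope expression and collecting terms. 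The linear contributions $(2\coeffA c+\coeffB) D$ and $(2\coeffA c+\coeffB) E$ directly produce the first blocks $(2\coeffA c+\coeffB)G$ of $G_q$ with exponents $E$, and $(2\coeffA c+\coeffB)G_I$ of $G_{I,q}$.

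For $D^2$, I would split the double sum into diagonal and off-diagonal parts:
\begin{equation*}
D^2 = \sum_{i=1}^{h} m_i(\alpha)^2\, G_{(i)}^2 + 2\sum_{i<j} m_i(\alpha) m_j(\alpha)\, G_{(i)} G_{(j)}.
\end{equation*}
Since $m_i(\alpha)^2 = \prod_k \alpha_k^{2 E_{(k,i)}}$ and $m_i(\alpha) m_j(\alpha) = \prod_k \alpha_k^{E_{(k,i)}+E_{(k,j)}}$, each term is again a valid monomial in the same dependent factors $\alpha_k$, and the exponent and generator matrices $\widehat{E}, \widehat{G}$ defined in \eqref{eq:quadMap2} enumerate exactly these terms (with the factor $2$ absorbed into the off-diagonal blocks). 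Multiplying by $\coeffA$ yields the second block of $G_q$ with exponents $\widehat{E}$, matching \eqref{eq:quadMap1} exactly and introducing no over-approximation.

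The mixed term $D E = \sum_{i,j} m_i(\alpha)\beta_j\, G_{(i)} G_{I(j)}$ is where the first over-approximation enters: because $|m_i(\alpha)|\le 1$ and $|\beta_j|\le 1$, the product $m_i(\alpha)\beta_j$ lies in $[-1,1]$, so I replace it with a fresh independent factor. This yields the independent generators $2\coeffA \overline{G}$ appearing in $G_{I,q}$. For $E^2$ I expand
\begin{equation*}
E^2 = \sum_{j=1}^{q} \beta_j^2\, G_{I(j)}^2 + 2\sum_{j<k} \beta_j \beta_k\, G_{I(j)} G_{I(k)},
\end{equation*}
and use the affine recentering $\beta_j^2 = \tfrac{1}{2} + \tfrac{1}{2}(2\beta_j^2 - 1)$ with $2\beta_j^2 - 1 \in [-1,1]$. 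The constant half contributes $0.5\,\coeffA \sum_j G_{I(j)}^2$ to $c_q$, exactly the extra term appearing in \eqref{eq:quadMap1}, while the $(2\beta_j^2 - 1)$ pieces become new independent factors with coefficients $0.5\, G_{I(j)}^2$. The cross terms $\beta_j \beta_k$ with $j<k$ lie in $[-1,1]$ and are again replaced by fresh independent factors with coefficients $2 G_{I(j)} G_{I(k)}$, producing the $2 \widecheck{G}_i$ blocks. Collecting these three contributions (scaled by $\coeffA$) gives precisely $\coeffA \widecheck{G}$.

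Assembling all pieces yields the constant $c_q$, the dependent matrices $G_q = [(2\coeffA c + \coeffB) G,~\coeffA \widehat{G}]$ with exponents $E_q = [E,~\widehat{E}]$, and the independent matrix $G_{I,q} = [(2\coeffA c + \coeffB) G_I,~2\coeffA \overline{G},~\coeffA \widecheck{G}]$, matching \eqref{eq:quadMap1}–\eqref{eq:quadMap2} exactly. The main technical obstacles will be (i) the bookkeeping of the triangular enumeration in $\widehat{G}_i$ and $\widecheck{G}_i$, which is purely combinatorial, and (ii) justifying that the replacements of $m_i(\alpha)\beta_j$, $\beta_j \beta_k$, and $2\beta_j^2 - 1$ by independent factors yield a valid enclosure rather than an equality; the latter is immediate because each expression takes values in $[-1,1]$, so ranging them freely over $[-1,1]$ produces a superset of the true image.
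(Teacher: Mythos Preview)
Your proposal is correct and follows essentially the same approach as the paper's proof: decompose $x=c+D(\alpha)+E(\beta)$, expand $\fitFun(x)$, handle $D^2$ exactly via the monomial product formula, and over-approximate $DE$ and $E^2$ by replacing $m_i(\alpha)\beta_j$, $2\beta_j^2-1$, and $\beta_j\beta_k$ with fresh independent factors in $[-1,1]$. One small notational remark: you use $E(\beta)$ for the independent part, which clashes with the exponent matrix $E$ already in the statement; the paper avoids this by writing $d(\alpha)$ and $z(\beta)$.
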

\begin{proof}
	The proof is provided in Appendix~\ref{sec:appendix}.
\end{proof}

\begin{figure}[!tb]
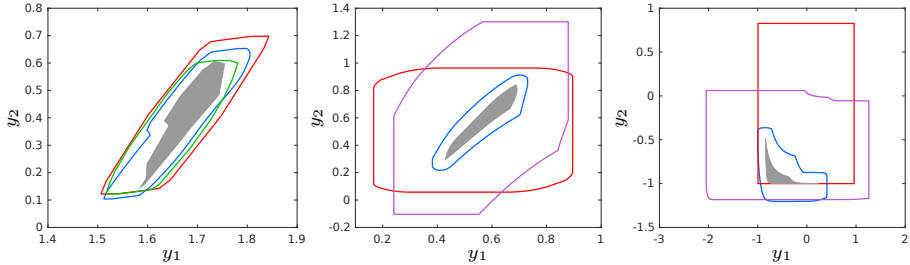

  \centering
  \psfragfig[width=0.98\columnwidth]{./figures/image}{
  \psfrag{a}[c][c]{\smaller[2]$y_1$}
  \psfrag{b}[c][c]{\rotatebox{180}{\smaller[2]$y_2$}}
  }
  \vspace{-8pt}
  \caption{Image enclosures computed with zonotopes (red), star sets (green), Taylor models (purple), and polynomial zonotopes (blue) for randomly generated neural networks with ReLU activations (left), sigmoid activations (middle), and hyperbolic tangent activations (right). The exact image is shown in gray.}
  \label{fig:image}
\end{figure}


\vspace{\secminus}
\section{Numerical Examples} \label{sec:numEx}
\vspace{-10pt}

We now demonstrate the performance of our approach for image computation, open-loop neural network verification, and reachability analysis of neural network controlled systems. If not stated otherwise, computations are carried out in MATLAB on a 2.9GHz quad-core i7 processor with 32GB memory. We integrated our implementation into CORA \cite{Althoff2015a} and published a repeatability package\footnote{\url{https://codeocean.com/capsule/8237552/tree/v1}}.

\vspace{-1.5pt}
\myparagraph{Image Enclosure}
\vspace{-1.5pt}

\noindent 
First, we demonstrate how our approach captures the non-convexity of the image through a neural network. For visualization purposes we use the deliberately simple example of randomly generated neural networks with two inputs, two outputs, and one hidden layer consisting of 50 neurons. The initial set is $\mathcal{X}_0 = [-1,1] \times [-1,1]$. We compare our polynomial-zonotope-based approach with the zonotope abstraction in \cite{Singh2018}, the star set approach in \cite{Tran2019c} using the triangle relaxation, and the Taylor model abstraction in \cite{Ivanov2021}. While our approach and the zonotope abstraction are applicable to all types of activation functions, the star set approach is restricted to ReLU activations and the Taylor model abstraction is limited to sigmoid and hyperbolic tangent activations. The resulting image enclosures are visualized in Fig.~\ref{fig:image}. While using zonotopes or star sets only yields a convex over-approximation, polynomial zonotopes are able to capture the non-convexity of the image and therefore provide a tighter enclosure. While Taylor models also capture the non-convexity of the image to some extent they are less precise than polynomial zonotopes, which can be explained as follows: 1) The zonotopic remainder of polynomial zonotopes prevents the rapid remainder growth observed for Taylor models, and 2) the quadratic approximation obtained with polynomial regression used for polynomial zonotopes is usually more precise than the Taylor series expansion used for Taylor models.

\addtocounter{footnote}{-1}

\begin{table*}[!tb]
\begin{center}
\caption{Computation times\protect\footnotemark $~$in seconds for different verification tools on a small but representative excerpt of network-specification combinations of the ACAS Xu benchmark. The symbol - indicates that the tool failed to verify the specification.}
\vspace{-18pt}
\label{tab:ASCAS}
\renewcommand{\arraystretch}{1.2}
\begin{tabular}{ l c c c c c c c c c c c c c c c}
 \toprule
 Net.$\,$ & Spec. & \rotatebox{90}{Cgdtest} & \rotatebox{90}{CROWN} &  \rotatebox{90}{Debona} & \rotatebox{90}{ERAN} & \rotatebox{90}{Marabou} & \rotatebox{90}{MN-BaB} & \rotatebox{90}{nnenum} & \rotatebox{90}{nnv} & \rotatebox{90}{NV.jl} & \rotatebox{90}{oval} & \rotatebox{90}{RPM} & \rotatebox{90}{venus2} & \rotatebox{90}{VeriNet} & \rotatebox{90}{Poly. zono.} \\ \midrule
1.9 & 1 & 0.37 & 1.37 & 111 & 3.91 & 0.66 & 48.7 & 0.41 & - & 1.44 & 0.71 & - & 0.53 & 0.55 & \textbf{0.31} \\
2.3 & 4 & - & 0.95 & 1.78 & 1.91 & 0.57 & 12.2 & \textbf{0.06} & - & - & 0.97 & - & 0.46 & 0.17 & 0.16 \\
3.5 & 3 & 0.41 & 0.37 & 1.15 & 1.85 & 0.61 & 6.17 & \textbf{0.05} & - & - & 0.58 & 34.1 & 0.42 & 0.25 & 0.32\\
4.5 & 4 & - & 0.35 & 0.20 & 1.82 & 0.61 & 5.57 & \textbf{0.08} & 0.24 & - & 0.48 & - & 0.42 & 0.21 & 0.16\\
5.6 & 3 & \hspace{1pt}0.38\hspace{1pt} & \hspace{1pt}0.63\hspace{1pt} & \hspace{1pt}2.27\hspace{1pt} & \hspace{1pt}1.82\hspace{1pt} & \hspace{1pt}0.66\hspace{1pt} & \hspace{1pt} 6.51 \hspace{1pt} & \hspace{1pt}\textbf{0.08}\hspace{1pt} & \hspace{7pt}-\hspace{7pt} & \hspace{7pt}-\hspace{7pt} & \hspace{1pt}0.52\hspace{1pt} & \hspace{1pt}40.6\hspace{1pt} & \hspace{1pt}0.48\hspace{1pt} & \hspace{1pt}0.37\hspace{1pt} & \hspace{1pt}0.43\hspace{1pt} \\
 \bottomrule
\end{tabular}
\end{center}
\vspace{-10pt}
\end{table*}

\vspace{-1.5pt}
\myparagraph{Open-Loop Neural Network Verification}
\vspace{-1.5pt}

\noindent For open-loop neural network verification the task is to verify that the image of the neural network satisfies certain specifications that are typically given by linear inequality constraints. We examine the ACAS Xu benchmark from the 2021 and 2022 VNN competition \cite{Bak2021b,Mueller2022} originally proposed in \cite[Sec.~5]{Katz2017}, which features neural networks that provide turn advisories for an aircraft to avoid collisions. All networks consist of 6 hidden layers with 50 ReLU neurons per layer. For a fair comparison we performed the evaluation on the same machine that was used for the VNN competition.
To compute the image through the neural networks with polynomial zonotopes, we apply a quadratic approximation obtained by polynomial regression for the first two layers, and a linear approximation in the remaining layers. Moreover, we recursively split the initial set to obtain a complete verifier. 
The comparison with the other tools that participated in the VNN competition shown in Tab.~\ref{tab:ASCAS} demonstrates that for some verification problems polynomial zonotopes are about as fast as the best tool in the competition. 

\footnotetext{Times taken from \url{https://github.com/stanleybak/vnncomp2021_results} and \url{https://github.com/ChristopherBrix/vnncomp2022_results}.}

\addtocounter{footnote}{-1}

\begin{table*}[!tb]
\begin{center}
\caption{Computation times\protect\footnotemark{} in seconds for reachability analysis of neural network controlled systems considering different tools and benchmarks. The dimension, the number of hidden layers, and the number of neurons in each layer is specified in parenthesis for each benchmark, where $a = 100$, $b = 5$ for ReLU activation functions, and $a = 20$, $b = 3$  otherwise. The symbol - indicates that the tool failed to verify the specification.} \label{tab:reach}
\vspace{-15pt}
\renewcommand{\arraystretch}{1.2}
\begin{tabular}{ l:c c c:c c c c c:c c c c c}
 \toprule
   \multicolumn{1}{c}{} & \multicolumn{3}{c}{\textbf{ReLU}} & \multicolumn{5}{c}{\textbf{sigmoid}} & \multicolumn{5}{c}{\textbf{hyp. tangent}} \\
	 & \rotatebox{90}{Sherlock} & \rotatebox{90}{JuliaReach} & \rotatebox{90}{Poly. zono.} & \rotatebox{90}{Verisig} & \rotatebox{90}{Verisig 2.0} & \rotatebox{90}{ReachNN*} & \rotatebox{90}{POLAR} & \rotatebox{90}{Poly. zono.} & \rotatebox{90}{Verisig} & \rotatebox{90}{Verisig 2.0} & \rotatebox{90}{ReachNN*} & \rotatebox{90}{POLAR} & \rotatebox{90}{Poly. zono.$~$} \\ \midrule 
 B1 (2,\,2,\,20) & $~~~~~~$ & $~~~~~~$ & $~~~~~~$ & - & 49 & 69 & 23 & \textbf{2} & - & 48 & - & 25 & \textbf{8} \\ 
 B2 (2,\,2,\,20) & & & & 12 & 8 & 32 & 10 & \textbf{1} & - & - & - & \textbf{3} & - \\
 B3 (2,\,2,\,20) & & & & 98 & 47 & 130 & 37 & \textbf{3} & 98 & 43 & 128 & 38 & \textbf{3} \\
 B4 (3,\,2,\,20) & & & & 24 & 12 & 20 & 4 & \textbf{1} & 23 & 11 & 20 & 4 & \textbf{1} \\
 B5 (3,\,3,\,100) & & & & \,196\, & 1063 & 31 & 25 & \textbf{2} & - & 168 & - & 31 & \textbf{2} \\
 TORA (4,\,3,\,$a$) & 30 & 2040 & \textbf{13} & 136 & 83 & \hspace{-5pt}13402\hspace{-5pt} & & \textbf{1} & 134 & 70 & 2524 & & \textbf{1} \\
 ACC (6,\,$b$,\,20) & 4 & \textbf{1} & 2 & & & & & & - & 1512 & - & 312 & \textbf{2} \\
 Unicycle (3,\,1,\,500) & 526 & 93 & \textbf{3} & & & & & & & &\\
 Airplane (12,\,3,\,100)~ & - & 29 & \textbf{7} & & & & & & & & \\
 Sin. Pend. (2,\,2,\,25)\, & 1 & 1 & \textbf{1} & $~~~~~~$ & $~~~~~~$ & $~~~~~~$ & $~~~~~$ & $~~~~$ & $~~~~~~$ & $~~~~~~$ & $~~~~~~$ & $~~~~~~$ & $~~~~$\\
 \bottomrule 
\end{tabular}
\end{center}
\end{table*}

\myparagraph{Neural Network Controlled Systems}

\noindent 
The main application of our approach is reachability analysis of neural network controlled systems, for which we now compare the performance to other state-of-the-art tools.
For a fair comparison we focus on published results for which the authors of the tools tuned the algorithm settings by themselves. In particular, we examine the benchmarks from \cite{Schilling2022} featuring ReLU neural network controllers, and the benchmarks from \cite{Ivanov2021} containing sigmoid and hyperbolic tangent neural network controllers. The goal for all benchmarks is to verify that the system reaches a goal set or avoids an unsafe region. 
As the computation times shown in Tab.~\ref{tab:reach} demonstrate, our polynomial-zonotope-based approach is for all but two benchmarks significantly faster than all other state-of-the-art tools, mainly since it avoids all major bottlenecks observed for the other tools: The polynomial approximations of the overall network used by Sherlock and ReachNN* are often imprecise, JuliaReach loses dependencies when enclosing Taylor models by zonotopes, Verisig is quite slow since the nonlinear system used to represent the neural network is high-dimensional, and Verisig 2.0 and POLAR suffer from the rapid remainder growth observed for Taylor models.


\section{Conclusion}
\vspace{-0.25cm}

We introduced a novel approach for computing tight non-convex enclosures of images through neural networks with ReLU, sigmoid, and hyperbolic tangent activation functions. Since we represent sets with polynomial zonotopes, all required calculations can be realized using simple matrix operations only, which makes our algorithm very efficient. While our proposed approach can also be applied to open-loop neural network verification, its main application is reachability analysis of neural network controlled systems. There, polynomial zonotopes enable the preservation of dependencies between the reachable set and the set of control inputs, which results in very tight enclosures of the reachable set. As we demonstrated on various numerical examples, our polynomial-zonotope-based approach consequently outperforms all other state-of-the-art methods for reachability analysis of neural network controlled systems. 

\footnotetext{Computation times taken from \cite[Tab.~1]{Schilling2022} for Sherlock and JuliaReach, from \cite[Tab.~2]{Ivanov2021} for Verisig, Verisig 2.0, and ReachNN*, and from \cite[Tab.~1]{Huang2022} for POLAR.}


\vspace{0.2cm}
{\scriptsize \noindent \textbf{Acknowledgements.} We gratefully acknowledge the financial support from the project justITSELF funded by the European Research Council (ERC) under grant agreement No 817629, from DIREC - Digital Research Centre Denmark, and from the Villum Investigator Grant S4OS. In addition, this material is based upon work supported by the Air Force Office of Scientific Research and the Office of Naval Research under award numbers FA9550-19-1-0288, FA9550-21-1-0121, FA9550-23-1-0066 and N00014-22-1-2156. Any opinions, findings, and conclusions or recommendations expressed in this material are those of the authors and do not necessarily reflect the views of the United States Air Force or the United States Navy.

}


%
\bibliographystyle{splncs04}
\bibliography{kochdumper,cpsGroup}


\begin{subappendices}
\renewcommand{\thesection}{\Alph{section}}%

\section{}
\label{sec:appendix}

We now provide the proof for Prop.~\ref{prop:quadMap}. According to Def.~\ref{def:polyZono}, the one-dimensional polynomial zonotope $\PZ = \langle c,G,G_I,E \rangle_{PZ}$ is defined as
\begin{equation} \label{eq:defPolyZono}
\begin{split}
	\PZ & = \bigg\{ c + \underbrace{\sum _{i=1}^h \bigg( \prod _{k=1}^p \alpha _k ^{E_{(k,i)}} \bigg) G_{(i)}}_{d(\alpha)} + \underbrace{\sum _{j=1}^{q} \beta _j G_{I(j)}}_{z(\beta)} ~ \bigg| ~ \alpha_k, \beta_j \in [-1,1] \bigg\} \\
	& = \big \{ c + d(\alpha) + z(\beta)~\big|~ \alpha,\beta \in [-\mathbf{1},\mathbf{1}] \big \},
\end{split}
\end{equation}
where $\alpha = [\alpha_1~\dots~\alpha_p]^T$ and $\beta = [\beta_1~\dots~\beta_q]^T$. To compute the image through the quadratic function $\fitFun(x)$ we require the expressions $d(\alpha)^2$, $d(\alpha) z(\beta)$, and $z(\beta)^2$, which we derive first. For $d(\alpha)^2$ we obtain
	\begin{equation} \label{eq:expression1}
	\begin{split}
		d(\alpha)^2 & =  \bigg( \sum _{i=1}^h \bigg( \prod _{k=1}^p \alpha _k ^{E_{(k,i)}} \bigg) G_{(i)} \bigg) \bigg( \sum _{j=1}^h \bigg( \prod _{k=1}^p \alpha _k ^{E_{(k,j)}} \bigg) G_{(j)} \bigg) \\
		& = \sum _{i=1}^h \sum_{j=1}^h \bigg( \prod _{k=1}^p \alpha _k ^{E_{(k,i)} + E_{(k,j)}} \bigg) G_{(i)} G_{(j)} \\
		& = \sum_{i=1}^h \bigg( \prod _{k=1}^p \alpha _k ^{2 E_{(k,i)}} \bigg) G_{(i)}^2 + \sum _{i=1}^{h-1} \sum_{j=i+1}^h \bigg( \prod _{k=1}^p \underbrace{\alpha _k ^{E_{(k,i)} + E_{(k,j)}}}_{\alpha_k^{\widehat{E}_{i(k,j)}}} \bigg) 2 \underbrace{G_{(i)} G_{(j)}}_{\widehat{G}_{i(j)}} \\
		& \overset{\substack{\eqref{eq:quadMap2}\\ \vspace{-2pt}}}{=} \sum_{i=1}^{h(h+1)/2} \bigg( \prod _{k=1}^p \alpha _k ^{\widehat{ E}_{(k,i)}} \bigg) \widehat{G}_{(i)},
	\end{split} 
	\end{equation}
	for $d(\alpha)z(\beta)$ we obtain
	\begin{equation} \label{eq:expression2}
	\begin{split}
		d(\alpha)z(\beta) &= \bigg( \sum _{i=1}^h \bigg( \prod _{k=1}^p \alpha _k ^{E_{(k,i)}} \bigg) G_{(i)} \bigg) \bigg( \sum _{j=1}^q \beta_j G_{I(j)} \bigg) \\
		& = \sum_{i=1}^h \sum_{j=1}^q \underbrace{\bigg( \beta_j \prod _{k=1}^p \alpha _k ^{E_{(k,i)}} \bigg)}_{\beta_{q + (i-1)h+j}} G_{(i)} G_{I(j)} \overset{\substack{\eqref{eq:quadMap2} \\ \vspace{-2pt}}}{=} \sum_{i = 1}^{hq} \beta_{q + i} \, \overline{G}_{(i)},
	\end{split}
	\end{equation}
	and for $z(\beta)^2$ we obtain
	\begin{align} 
		z(\beta)^2 & =  \bigg( \sum _{i=1}^q \beta_i G_{I(i)} \bigg) \bigg( \sum _{j=1}^q \beta_j G_{I(j)} \bigg) = \sum _{i=1}^q \sum_{j=1}^q \beta_i \beta_j \, G_{I(i)} G_{I(j)} \nonumber \\
		& = \sum_{i=1}^q \beta_i^2 G_{I(i)}^2 + \sum _{i=1}^{q-1} \sum_{j=i+1}^q \beta_i \beta_j \, 2 \, G_{I(i)} G_{I(j)} \nonumber \\
		& = 0.5 \sum_{i=1}^q G_{I(i)}^2 + \sum_{i=1}^q \underbrace{(2 \beta_i^2 - 1)}_{\beta_{(h+1)q+i}} 0.5 \, G_{I(i)}^2 + \sum _{i=1}^{q-1} \sum_{j=i+1}^q \underbrace{\beta_i \beta_j}_{\beta_{a(i,j)}} 2 \, \underbrace{G_{I(i)} G_{I(j)}}_{\widecheck{G}_{i(j)}} \label{eq:expression3} \\
		& \overset{\substack{\eqref{eq:quadMap2}\\ \vspace{-2pt}}}{=} 0.5 \sum_{i=1}^q G_{I(i)}^2 + \sum_{i=1}^{q(q+1)/2} \beta_{(h+1)q + i} \, \widecheck{G}_{(i)}, \nonumber 
	\end{align}
where the function $a(i,j)$ maps indices $i,j$ to a new index:
\begin{equation*}
	a(i,j) = (h+2)q + j-i + \sum_{k=1}^{i-1} q-k.
\end{equation*}
In \eqref{eq:expression2} and \eqref{eq:expression3}, we substituted the expressions $\beta_j \prod_{k=1}^p \alpha_k^{E_{(k,i)}}$, $2\beta_i^2 -1$, and $\beta_i\beta_j$ containing polynomial terms of the independent factors $\beta$ by new independent factors, which results in an enclosure due to the loss of dependency. The substitution is possible since
\begin{equation*}
\beta_j \prod_{k=1}^p \alpha_k^{E_{(k,i)}} \in [-1,1],~~ 2\beta_i^2 -1 \in [-1,1],~~ \text{and} ~~\beta_i\beta_j \in [-1,1].
\end{equation*}
Finally, we obtain for the image
	\begin{equation*}
	\begin{split}
	& \big \{ \fitFun(x)~\big |~ x \in \mathcal{PZ} \big \} = \big \{\coeffA \,x^2 + \coeffB \,x + \coeffC ~ \big | ~ x \in \mathcal{PZ} \big \} \overset{\substack{\eqref{eq:defPolyZono} \\ \vspace{-2pt}}}{=} \\
	& ~ \\	
	& \big \{ \coeffA ( c + d(\alpha) + z(\beta))^2 + \coeffB (c + d(\alpha) + z(\beta)) + \coeffC   ~\big | ~ \alpha,\beta \in [-\mathbf{1},\mathbf{1}] \big \} = \\
	& ~ \\	
	& \big \{ \coeffA c^2 + \coeffB c + \coeffC  + (2 \coeffA c + \coeffB) d(\alpha) + \coeffA d(\alpha)^2\\
	& ~~ + (2 \coeffA c + \coeffB) z(\beta) + 2 \coeffA d(\alpha)z(\beta) + \coeffA z(\beta)^2  ~ \big | ~ \alpha,\beta \in [-\mathbf{1},\mathbf{1}] \big\} \overset{\substack{\eqref{eq:expression1},\eqref{eq:expression2},\eqref{eq:expression3}\\ \vspace{-2pt}}}{\subseteq} \\
	& ~ \\
	& \bigg \langle \coeffA c^2 + \coeffB c + \coeffC + 0.5 \, \coeffA \sum_{i=1}^q G_I^2, \big[(2 \coeffA c + \coeffB)G ~~ \coeffA \widehat{G} \big], \\
	& ~~~~~~~~~~~~~~~~~~~~~ \big[  (2 \coeffA c + \coeffB)G_I ~~ 2\coeffA \overline{G} ~~\coeffA \widecheck{G} \big], \big[E ~~ \widehat{E} \big] \bigg \rangle_{PZ} \hspace{-8pt} \overset{\substack{ \eqref{eq:quadMap1} \\ \vspace{-2pt}}}{=} \langle c_q,G_q,G_{I,q},E_q \rangle_{PZ},
	\end{split}
	\end{equation*}
which concludes the proof.
\end{subappendices}

\end{document}